\newcommand{\algname}{\textsc{DeepBeam }}
\newcommand{\algnamens}{\textsc{DeepBeam}}
\newtheorem{theorem}{Theorem}
\title{Deep Learning Based Speech Beamforming}
\name{Kaizhi Qian$^{1*}$, Yang Zhang$^{2*}$\thanks{$^*$ Denotes equal contribution.}, Shiyu Chang$^2$, Xuesong Yang$^1$, Dinei Florencio$^3$, Mark Hasegawa-Johnson${^1}$\thanks{This paper was funded by QNRF grant NPRP 7-766-1-140.}}
\address{$^1$University of Illinois at Urbana-Champaign, USA \\ $^2$ IBM T. J. Watson Research Center, USA \quad
  $^3$Microsoft Research, USA 
  \\{\small\tt \{kqian3,xyang45,jhasegaw\}@illinois.edu, \{yang.zhang2,shiyu.chang\}@ibm.com, dinei@microsoft.com  }}
\begin{document}
\ninept
\maketitle
\begin{abstract}
Multi-channel speech enhancement with ad-hoc sensors has been a challenging task. Speech model guided beamforming algorithms are able to recover natural sounding speech, but the speech models tend to be oversimplified or the inference would otherwise be too complicated. On the other hand, deep learning based enhancement approaches are able to learn complicated speech distributions and perform efficient inference, but they are unable to deal with variable number of input channels. Also, deep learning approaches introduce a lot of errors, particularly in the presence of unseen noise types and settings. We have therefore proposed an enhancement framework called \algnamens, which combines the two complementary classes of algorithms. \algname introduces a beamforming filter to produce natural sounding speech, but the filter coefficients are determined with the help of a monaural speech enhancement neural network. Experiments on synthetic and real-world data show that \algname is able to produce clean, dry and natural sounding speech, and is robust against unseen noise.
\end{abstract}
\begin{keywords}
multi-channel speech enhancement, ad-hoc sensors, beamforming, deep learning, WaveNet
\end{keywords}
\section{Introduction}
\label{sec:intro}

Multi-channel speech enhancement with ad-hoc sensors has long been a challenging task  \cite{brandstein2013microphone}. As the traditional benchmark in multi-channel enhancement tasks, beamforming algorithms do not work well with with ad-hoc microphones. This is because most beamformers need to calibrate the speaker location as well as the interference characteristics, so that it can turn its beam toward the speaker, while suppressing the interference. However, neither of the two vital information can be accurately measured, due to the missing sensor position information and microphone heterogeneity \cite{markovich2015optimal}.

Another class of beamforming algorithms avoid measuring the speaker position and interference. Instead, they introduce prior knowledge on speech, and find the optimal beamformer by maximizing the "speechness" criteria, such as sample kurtosis \cite{gillespie2001speech}, negentropy \cite{kumatani2009beamforming}, speech prior distributions \cite{kim2007blind, kitamura2016determined}, fitting glottal residual \cite{zhang2017glottal} etc. In particular, the GRAB algorithm \cite{zhang2017glottal} is able to outperform the closest microphone strategy even in very adverse real-world scenarios. Despite their success, these algorithms are bottlenecked by their oversimplified prior knowledge. For example, GRAB only models glottal energy, resulting in vocal tract ambiguity.

On the other hand, deep learning techniques are well known for their ability to capture complex probability dependencies and efficient inference, and thus have been widely used in single-channel speech enhancement tasks \cite{chen2016long, huang2014deep, weninger2014discriminatively, qian2017speech, rethage2017wavenet, pascual2017segan}. 
Unfortunately, directly applying deep enhancement networks to multi-channel enhancement suffers from two difficulties.
First, deep enhancement techniques often produce a lot of artifacts and nonlinear distortions \cite{qian2017speech, rethage2017wavenet}， which are perceptually undesirable.
Second, neural networks often generalize poorly to unseen noise and configurations, whereas in speech enhancement with ad-hoc sensors, such variability is large.

It turns out, these problems can in turn be resolved by traditional beamforming. Therefore, several algorithms \cite{heymann2016neural,erdogan2016improved,xiao2017time,zhang2017speech,pfeifenberger2017dnn} have been proposed that applies deep learning to predict time-frequency masks, and then beamforming to produce the enhanced speech. However, these methods are confined to frequency domain, which suffers from two problems for our application. First, they to not work well for ad-hoc microphones, because of the spatial correlation estimation errors. Second, our application is for human consumption, but the frequency-domain methods suffer from phase distortions and discontinuities, which impede perceptual quality.

Motivated by this observation, we have proposed an enhancement framework for ad-hoc microphones called \algnamens, which combines deep learning and beamforming, and which directly works on waveform. \algname introduces a time-domain beamforming filter to produce natural sounding speech, but the filter coefficients are iteratively determined with the help of WaveNet \cite{oord2016wavenet}. It can be shown that despite the error-prone enhancement network, \algname is able to converge approximately to the optimal beamformer under some assumptions. Experiments on both the simulated and real-world data show that \algname is able to produce clean, dry and natural sounding speech, and generalize well to various settings.

\section{Problem Formulation}
\label{sec:formulation}
To formally define the problem, denote $s[t]$ as the clean speech signal. Suppose there are $K$ channels of observed signals, $y_k[t], k = 1, \cdots, K$, which are represented as
\begin{equation}
\small
y_k[t] = s[t] * i_k[t] + n[t] * j_k[t]
\label{eq:noisy_obs}
\end{equation}
where $*$ denotes discrete convolution, $n(t)$ denotes additive noise. $i_k[t]$ and $j_k[t]$ are the impulse responses of the signal reverberation and noise reverberation in the $k$-th channel respectively. Our goal is to design a $\tau$-tap beamformer $h_k[t], k = 1, \cdots, K$, whose output is defined as
\begin{equation}
\small
x[t] = \sum_{k = 1} ^ K y_k[t] * h_k[t]
\label{eq:beamform_output}
\end{equation}

For notational brevity, define
\begin{equation}
\small
\begin{aligned}
& \bm{s} = [s[1], \cdots, s[T]]^T \quad \bm x = [x[1], \cdots, x[T]]^T\\
& \bm{y}_k = [y_k[1], \cdots, y_k[T]]^T \quad \bm{y} = [\bm y_1^T, \cdots, \bm y_K^T]^T\\
& \bm{h} = [h_1[1], \cdots, h_1[\tau], h_2[1], \cdots, h_K[\tau]]^T
\end{aligned}
\end{equation}
which are all random vectors. Also define convolutional matrices
\begin{equation}
\small
\bm Y_k = \left[
\begin{array}{cccc}
y_k[1] & & & \\ 
y_k[2] & y_k[1] & &  \\
\vdots & \vdots & \ddots & \\
y_k[\tau] & y_k[\tau-1] & \cdots & y_k[1] \\
\vdots & \vdots &  & \vdots \\
y_k[T] & y_k[T-1]  & \cdots & y_k[T - \tau + 1]
\end{array}
\right]
\end{equation}
and
\begin{equation}
\small
\bm Y = [\bm Y_1, \cdots, \bm Y_K]
\end{equation}

With these notations, Eq.~\eqref{eq:beamform_output} can be simplified as
\begin{equation}
\small
\bm x = \bm Y \bm h
\end{equation}

The target of designing the beamformer is to minimize the weighted mean squared error (MSE):
\begin{equation}
\small
\min_{\bm x = \bm Y \bm h} \mathbb{E} \left[ \lVert \bm x - \bm s \rVert_{\bm W}^2| \bm y\right]
\label{eq:wiener_prob}
\end{equation}
where $\lVert \bm x \rVert_{\bm W}^2 = \bm x^T \bm W \bm x$; $\bm W$ is a positive definite weight matrix, which, in our case, is a diagonal matrix of $\mbox{Var}^{-1}(s[t] | \bm y)$.

Eq.~\eqref{eq:wiener_prob} is a Wiener filtering problem \cite{wiener1949extrapolation}, whose solution is
\begin{equation}
\small
\bm x^* = \bm P \mathbb{E}[\bm s | \bm y]
\label{eq:wiener_solution}
\end{equation}
where
\begin{equation}
\small
\bm P = \bm Y (\bm Y^T \bm W \bm Y)^{-1} \bm Y^T \bm W
\label{eq:projection}
\end{equation}
is in fact the \emph{projection matrix} onto the beamforming output space. So by Eq.~\eqref{eq:wiener_solution}, $\bm x^*$ is essentially projecting $\mathbb{E}[\bm s | \bm y]$ onto the space that is representable by the beamforming filter.

As shown by Eq.~\eqref{eq:wiener_solution}, solving the Wiener filtering problem requires computing $\mathbb{E}[\bm s | \bm y]$, which, due to the complex probabilistic dependencies, we would like to introduce a deep neural network to learn. However, as discussed, training a neural network to directly predict $\mathbb{E}[\bm s | \bm y]$ from the multi-channel input $\bm y$ suffers from inflexible input dimensions, artifacts and poor generalization. \algname tries to resolve these problems and find an approximate solution.

\section{The \algname Framework}
\label{sec:framework}
In this section, we will describe the \algname algorithm. We will first outline the algorithm, and then describe the neural network structure it applies. Finally, a convergence analysis is introduced.

\subsection{The Algorithm Overview}
\label{subsec:overview}

As mentioned, \algname introduces a deep enhancement network to learn the posterior expectation, while addressing its limitations. First, \algname are regularized by the beamformer to generalize well to unseen noise and microphone configurations. Second, it tolerates the distortions and artifacts generated by the neural network. Formally, the neural network outputs an inaccurate prediction of the posterior expectation $\mathbb{E}[\bm s | \bm \xi]$, 
\begin{equation}
\small
f(\bm \xi) = \mathbb{E}[\bm s | \bm \xi] + \bm \varepsilon(\bm \xi)
\label{eq:inaccurate_posterior}
\end{equation}
where $\bm \xi$ is a \emph{single-channel} noisy observation, and $\bm \varepsilon(\bm \xi)$ is the prediction error. The goal of \algname is to approximate the optimal beamformer given the inaccurate enhancement network. Alg.~\ref{alg:alg} shows the description of the \algname algorithm. A graph of the \algname framework is shown in Fig.~\ref{fig:framework}.
\begin{figure}
	\centering
	\includegraphics[width = 1\linewidth]{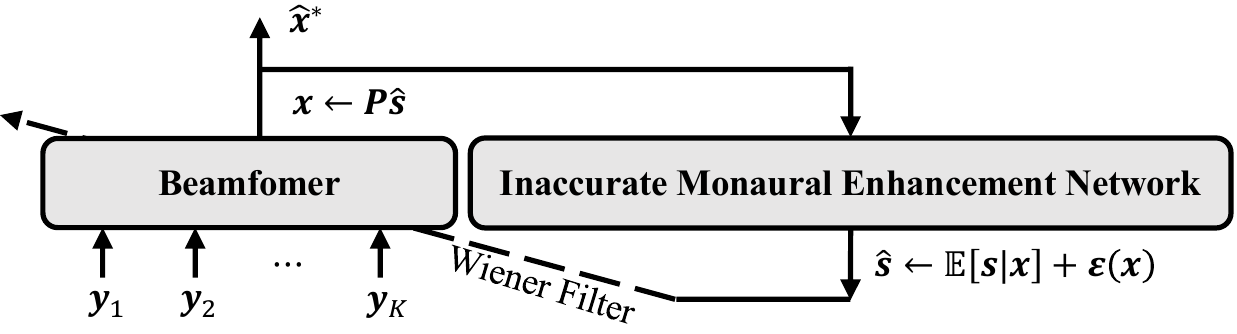}
	\caption{\algname framework.}
	\label{fig:framework}
\end{figure}

\begin{algorithm}[!t]
\caption{The \algname algorithm.}
\begin{algorithmic}[1]
\REQUIRE Multi-channel noisy speech observations $\bm y$; \\
A neural network that predicts $f(\bm \xi)$ (Eq.~\eqref{eq:inaccurate_posterior}) from any single-channel noisy observation $\bm \xi$.
\ENSURE Beamformer output $\hat{\bm x}^*$.
\newline\newline
\textbf{Initialization:}
\STATE Find the `cleanest' channel $k^*$ by finding the channel that has the smallest 0.4 quantile of its squared sample points.
\STATE Set $\bm x^{(0)} = \bm y_{k^*}$.
\newline
\textbf{Iteration:}
\FOR{$n = 1$ to maximum number of iterations}
\STATE Feed $\bm x^{(n-1)}$ to the monaural enhancement network, and obtain its output
\begin{equation}
\small
\hat{\bm s}^{(n)} = f(\bm x^{(n-1)})= \mathbb{E}[\bm s | \bm x^{(n-1)}] + \bm \varepsilon(\bm x^{(n-1)})
\label{eq:enhance_net_update}
\end{equation}
\vspace{-0.2in}
\STATE Update the beamformer coefficients and output
\begin{equation}
\small
\bm x^{(n)} = \bm P \hat{\bm s}^{(n)}
\label{eq:wiener_update}
\end{equation}
\ENDFOR
\RETURN{$\hat{\bm x}^* = \bm x^{(N)}$}
\end{algorithmic}
\label{alg:alg}
\end{algorithm}
\vspace{-0.1in}

Alg.~\ref{alg:alg} essentially alternates between the posterior expectation and projection iteratively. It will be shown in section~\ref{subsec:analysis} that as long as the error term $\bm \varepsilon$ is not too large, this iteration will approximately converge to the optimal beamformer output.

One elegance of \algname is that $\bm x^{(n)}$ can be regarded as a noisy observation, and shares some statistical structures with the true noisy observations, $\bm y_k$. To see this, notice that by Eq.~\eqref{eq:wiener_update}, $\bm x^{(n)}$ is the output of a beamformer on $\bm y$. Therefore, it can be shown that $\bm x^{(n)}$ also takes the form of Eq.~\eqref{eq:noisy_obs}, with the same speech and noise source, but with a different impulse response. This justifies the use of one monaural enhancement network to take care of all the $\bm x^{(n)}$.

\subsection{Enhancement Network Structure}
\label{subsec:enhance_net}

\algname is a general framework, in which the choice of the neural network structure is not fixed. The following network structure is just one of the structures that produce competitive results.

The enhancement network applied here is similar to \cite{rethage2017wavenet}, which is inspired by WaveNet \cite{oord2016wavenet}. Formally, denote the \emph{quantized} speech samples as $\tilde{s}[t]$, and the samples of $\bm x^{(n)}$ as $x^{(n)}[t]$. Then the enhancement network predicts the posterior probability mass function (PMF) of $\tilde{s}[t]$:
\begin{equation}
\small
p(\tilde{s}[t] | \bm x^{(n)}) \approx p(\tilde{s}[t] | x^{(n)}[t - \tau_r], \cdots, x^{(n)}[t + \tau_r])
\end{equation}
Here we have restricted the probabilistic dependency to span $\tau_r$ time steps. Cross-entropy is applied as the loss function.

Similar to WaveNet, the enhancement network consists of two modules. The first module, called the dilated convolution module, contains a stack of dilated convolutional layers with residual connections and skip outputs. The second module, called the post processing module, sums all the skip outputs and feeds them into a stack of fully connected layers before producing the final output.

There are two major differences from the standard WaveNet structure. First, the input to the enhancement network is the noisy observation waveform $\bm x^{(n)}$ instead of the clean speech. Second, to account for the future dependencies, the convolutional layers are noncausal $1 \times 3$ instead of the causal $1 \times 2$.

After the posterior distribution is predicted, the posterior moments, $\bm E[\bm s | \bm x^{(n)}]$ and $\mbox{Var}[s[t] | \bm y]$ (for computing $\bm W$), are computed as the moments of the predicted PMF.

\subsection{Convergence Analysis}
\label{subsec:analysis}

In order to analyze the convergence property of \algnamens, we assume the following bound on the error term
\begin{equation}
\small
\mathbb{E}[  \lVert \bm P \bm \varepsilon (\bm x^{(n)})  \rVert^2 _{\bm W} | \bm y] \leq
\rho \mathbb{E}[ \lVert \bm x^{(n)} - \bm s \rVert^2 _{\bm W} | \bm y]
\label{eq:limited_error}
\end{equation}
where $\rho < 0.5$ is some constant. This assumption is actually not quite stringent, because it does not bound the weighted norm of $\bm \varepsilon(\bm x^{(n)})$ itself, but its projected value $\bm P \bm \varepsilon(\bm x^{(n)})$. In fact, the projection can drastically reduce the weighted norm of the error term. For example, most of the artifacts and nonlinear distortions that the enhancement network introduces cannot possibly be generated by beamforming on $\bm y$, and therefore will be removed by the projection. The only errors that are likely to remain are residual noise and reverberations. This is one advantage of combining beamforming filter and neural network. This assumption is also very intuitive. It means that the projected output error is always smaller than input error. 

Then, we have the following theorem.

\begin{theorem} Suppose Eq.~\eqref{eq:limited_error} holds. Then
\begin{equation}
\small
\underset{n \rightarrow \infty}{\operatorname{lim~sup}} \mathbb{E}[ \lVert  \bm x^{(n)} - \bm x^*  \rVert^2 _{\bm W} | \bm y] \leq u
\label{eq:convergence}
\end{equation}
where
\begin{equation}
\small
\begin{aligned}
u &= \frac{2\rho}{1 - 2\rho} \mathbb{E}[ \lVert \bm s - \bm x^* \rVert^2 _{\bm W} | \bm y] \\
&+ \frac{2}{1 - 2\rho} \sup_{n} \mathbb{E} [\lVert \bm P \mathbb{E}[\bm s | \bm x^{(n)}] - \bm x^* \rVert^2_{\bm W} | \bm y ]
\end{aligned}
\end{equation}
\label{thm}
\end{theorem}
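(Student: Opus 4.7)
The plan is to derive an affine contraction of the form $a_n \le 2\rho\, a_{n-1} + C$ for the error sequence $a_n := \mathbb{E}[\|\bm x^{(n)} - \bm x^*\|_{\bm W}^2 | \bm y]$, with contraction factor $2\rho < 1$; taking the limit superior on both sides and solving then immediately yields $\limsup_n a_n \le C / (1-2\rho)$, and I expect the constant $C$ to split into exactly the two pieces appearing in the stated bound $u$.

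To derive this recursion I would start from the update \eqref{eq:wiener_update}--\eqref{eq:enhance_net_update} together with $\bm x^* = \bm P\,\mathbb{E}[\bm s | \bm y]$ and expand
\[
\bm x^{(n)} - \bm x^* = \bigl(\bm P\,\mathbb{E}[\bm s | \bm x^{(n-1)}] - \bm x^*\bigr) + \bm P\,\bm\varepsilon(\bm x^{(n-1)}).
\]
The elementary inequality $\|a+b\|_{\bm W}^2 \le 2\|a\|_{\bm W}^2 + 2\|b\|_{\bm W}^2$ and the error assumption \eqref{eq:limited_error} then give
\[
a_n \;\le\; 2\, c_n \;+\; 2\rho\,\mathbb{E}[\|\bm x^{(n-1)} - \bm s\|_{\bm W}^2 | \bm y],
\]
where $c_n := \mathbb{E}[\|\bm P\,\mathbb{E}[\bm s | \bm x^{(n-1)}] - \bm x^*\|_{\bm W}^2 | \bm y]$.

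The key step is a Pythagorean identity for the term $\mathbb{E}[\|\bm x^{(n-1)} - \bm s\|_{\bm W}^2 | \bm y]$. Writing $\bm x^{(n-1)} - \bm s = (\bm x^{(n-1)} - \bm x^*) + (\bm x^* - \bm s)$, expanding the squared $\bm W$-norm, and taking conditional expectation produces a cross term proportional to $(\bm x^{(n-1)} - \bm x^*)^\top \bm W (\bm x^* - \mathbb{E}[\bm s | \bm y])$. From \eqref{eq:projection}, $\bm P$ is the $\bm W$-orthogonal projector onto $\operatorname{range}(\bm Y)$, so a direct check gives $\bm P^\top \bm W (\bm I - \bm P) = 0$; since $\bm x^{(n-1)} - \bm x^*$ lies in $\operatorname{range}(\bm P)$ and $\bm x^* - \mathbb{E}[\bm s | \bm y] = -(\bm I - \bm P)\,\mathbb{E}[\bm s | \bm y]$, this cross term vanishes. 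Consequently
\[
\mathbb{E}[\|\bm x^{(n-1)} - \bm s\|_{\bm W}^2 | \bm y] = a_{n-1} + \mathbb{E}[\|\bm x^* - \bm s\|_{\bm W}^2 | \bm y],
\]
which when substituted above yields $a_n \le 2\rho\, a_{n-1} + 2\rho\, b + 2\, c_n$ with $b := \mathbb{E}[\|\bm x^* - \bm s\|_{\bm W}^2 | \bm y]$. Taking $\limsup_n$ on both sides and solving for $\limsup_n a_n$ (using $2\rho < 1$ and $\limsup_n c_n \le \sup_n c_n$) delivers exactly the bound $u$ in \eqref{eq:convergence}.

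The main obstacle I anticipate is rigorously handling the $\bm W$-orthogonality when everything in sight is random: both the weight $\bm W$ (the diagonal of conditional variances) and the iterates $\bm x^{(n-1)}$ (nonlinear functionals of $\bm y$ through the network) depend on $\bm y$, but conditioning on $\bm y$ freezes all of them, so the finite-dimensional projection calculus goes through pointwise. A secondary check is ensuring that $\sup_n c_n$ and $a_0$ are finite so that $\limsup_n a_n$ is genuinely bounded, which is needed to cancel $\limsup_n a_n$ across the recursion; the recursion itself forces boundedness geometrically from any finite initialization.
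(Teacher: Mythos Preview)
Your proposal is correct and follows essentially the same route as the paper: the same decomposition $\bm x^{(n)} - \bm x^* = (\bm P\,\mathbb{E}[\bm s\mid \bm x^{(n-1)}] - \bm x^*) + \bm P\,\bm\varepsilon(\bm x^{(n-1)})$ combined with the inequality $\|a+b\|_{\bm W}^2 \le 2\|a\|_{\bm W}^2 + 2\|b\|_{\bm W}^2$, the same $\bm W$-orthogonality (Pythagorean) identity for $\mathbb{E}[\|\bm x^{(n-1)} - \bm s\|_{\bm W}^2\mid \bm y]$, and the same affine contraction $a_n \le 2\rho\,a_{n-1} + (1-2\rho)u$. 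The only cosmetic difference is that the paper introduces the shifted sequence $a^{(n)} - u$ and iterates it geometrically, whereas you take $\limsup$ directly and solve; both are standard and equivalent.
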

\begin{proof}
On one hand, from Eqs.~\eqref{eq:enhance_net_update} and \eqref{eq:wiener_update}
\begin{equation}
\small
\begin{aligned}
&\mathbb{E} [ \lVert \bm P \bm \varepsilon(\bm x ^{(n)}) \rVert^2_{\bm W} | \bm y]= 
\mathbb{E} [ \lVert \bm x^{(n+1)} - \bm P \mathbb{E}[\bm s | \bm x^{(n)}] \rVert^2_{\bm W}| \bm y ] \\
\geq & \frac{1}{2}\mathbb{E} [\lVert \bm x^{(n+1)} - \bm x^*\rVert_{\bm W}^2 | \bm y]  - \mathbb{E} [ \lVert \bm P \mathbb{E}[\bm s | \bm x^{(n)}] - \bm x^* \rVert^2_{\bm W} | \bm y ]
\end{aligned}
\label{eq:lhs}
\end{equation}
On the other hand, by orthogonality principle
\begin{equation}
\small
\begin{aligned}
&\mathbb{E}[  \lVert \bm x^{(n)} - \bm s  \rVert^2 _{\bm W} | \bm y] 
= \mathbb{E}[ \lVert \bm x^{(n)} - \bm x^* \rVert^2 _{\bm W} | \bm y]
+ \mathbb{E}[ \lVert \bm s - \bm x^* \rVert^2 _{\bm W} | \bm y]
\end{aligned}
\label{eq:rhs}
\end{equation}
Combining Eqs.~\eqref{eq:limited_error}, \eqref{eq:lhs} and \eqref{eq:rhs}, we have
\begin{equation}
\small
\begin{aligned}
& \mathbb{E}[ \lVert \bm x^{(n+1)} - \bm x^* \rVert^2 _{\bm W} | \bm y]
\leq  2\rho \mathbb{E}[ \lVert \bm x^{(n)} - \bm x^* \rVert^2 _{\bm W} | \bm y] + (1 - 2\rho) u
\end{aligned}
\label{eq:recursive}
\end{equation}
Create an auxiliary sequence
\begin{equation}
\small
a^{(n)} = \mathbb{E}[ \lVert  \bm x^{(n)} - \bm x^* \rVert^2 _{\bm W} | \bm y] - u
\end{equation}
\small
Then by Eq.~\eqref{eq:recursive},
\begin{equation}
a^{(n+1)} \leq (2\rho)^n a^{(1)}
\label{eq:a_recursion}
\end{equation}
Taking $\mbox{lim sup}_{n\rightarrow\infty}$ on both sides of Eq.~\eqref{eq:a_recursion} concludes the proof.
\end{proof}

If $u = 0$, then Eq.~\eqref{eq:convergence} implies mean square convergence to the optimal beamformer output. In actuality, $u$ is nonzero, but it tends to be very small. The first term of $u$ measures the distance between the optimal beamformer output and the true speech. According to our empirical study, when the number of channel is sufficient, the optimal beamformer is able to recover the true speech very well, so the first term is small. The second term of $u$ measures the distance between two posterior expectations $\bm P \mathbb{E}[\bm s | \bm x^{(n)}]$ and $\bm P \mathbb{E}[\bm s | \bm y]$. The former is conditional on single-channel noisy speech, and the latter on multiple-channel noisy speech. Considering that the speech sample space is highly structured, and that the noisy speech $\bm x^{(n)}$ is relatively clean already, both posterior expectations should be close to the true speech, and thereby close to each other. In a nutshell, with a small $u$, the \algname prediction is highly accurate. Section~\ref{subsec:convergence} will verify the convergence behavior of \algname empirically.

\section{Experiments}
\label{sec:exper}
This section first introduces how the enhancement network is configured and trained, and then presents the results of experiments on both simulated and real-world data. Audio samples can be found in \url{http://tiny.cc/a1qjoy} .

\subsection{Enhancement Network Configurations}
\label{subsec:net_config}
The enhancement network hyperparameter configurations follow \cite{oord2016wavenet}. It has 4 blocks of 10 dilated convolution layers. There are two post processing layers. The hidden node dimension is 32, and the skip node dimension is 256. The clean speech is quantized into 256 level via $\mu$-law companding, and thus the output dimension is 256. The activation function in the dilated convolutional layers is the gated activation unit; that in the post processing layers is the ReLU function. The output activation is softmax.

The enhancement network is trained on simulated data \emph{only}, which is generated in the same way as in \cite{zhang2017glottal}. The speech source, noise source and eight microphones are randomly placed into a randomly sized cubic room. The impulse response from each source to each microphone is generated using the image-source method \cite{allen1979image, lehmann2010diffuse}. The noisy observations are generated according to Eq.~\eqref{eq:noisy_obs}. The reverberation time is uniformly randomly drawn from [$100$, $300$]~ms. The energy ratio between the speech source and noise source, $E_r$, is uniformly randomly drawn from [$-5$, $20$]~dB. The speech content is drawn from VCTK \cite{yamagishienglish}, which contains 109 speakers. The noise content contains 90 minutes of audio drawn from \cite{kumar2016speech,soundcloud,hu100}. The total duration of the training audio is 8 hours. The enhancement network is trained using ADAM optimizer for 400,000 iterations.

\subsection{Simulated Data Evaluation}
\label{subsec:simulated}

The simulated data for evaluation is generated the same way as the training data, except for two differences. First, the source energy ratio, $E_r$, is set to four levels, $-10$~dB, $0$~dB, $10$~dB, and $20$~dB. Second, both the speaker and noise can be either seen or unseen in the training set, leading to four different scenarios to test generalizability. It is worth highlighting that the unseen speaker utterances and unseen noise are both drawn from different corpora from training, TIMIT \cite{garofolo1993darpa} and FreeSFX \cite{FreeSFX} respectively. Each utterance is 3 seconds in length. The total length of the dataset is 12 minutes.

\algname is compared with GRAB \cite{zhang2017glottal}, MVDR\footnote{Clean speech is given for voice activity detection.} \cite{griffiths1982alternative}, IVA \cite{kim2007blind} and the closest channel (CLOSEST), in term of two criteria:\\
\noindent
$\bullet\quad$ {\bf Signal-to-Noise Ratio (SNR)}: The energy ratio of processed clean speech over processed noise in dB.

\noindent
$\bullet\quad$ {\bf Direct-to-Reverberant Ratio (DRR)}: the ratio of the energy of direct path speech in the processed output over that of its reverberation in dB. Direct path and reverberation are defined as clean dry speech convolved with the peak portion and tail portion of processed room impulse response. The peak portion is defined as $\pm6$~ms within the highest peak; the tail portion is defined as $\pm6$~ms beyond.

\begin{table}
\caption{Simulated Data Evaluation Results.}
\label{tab:simulated}
	\small
	\centering
	\begin{tabular}{l| l | c c c c}
    	\hline\hline
        \multicolumn{2}{c|}{$E_r = $} & -10 & 0 & 10 & 20 \\
        \hline
        \multirow{8}{0.7cm}{\textbf{SNR} (dB)} & \algname S1 & 18.5 & 22.0 & 26.5 & 28.4\\
         & \algname S2 & 17.1 & 20.3 & 25.9 & 27.4 \\
         & \algname S3 & 15.3 & 19.5 & 24.1 & 27.6\\
         & \algname S4 & 14.1 & 19.0 & 23.1 & 28.5\\
         & GRAB & 2.48 & 12.5 & 21.6 & 25.4 \\
         & CLOSEST & -5.13 & 3.38 & 14.9 & 24.8 \\
         & MVDR & 8.41 & 12.9 & 22.6 & 26.7 \\
         & IVA & 10.3 & 13.3 & 16.8 & 19.2\\
         \hline
        \multirow{8}{0.7cm}{\textbf{DRR} (dB)} & \algname S1 & 3.45 & 8.97 & 11.2 & 11.5 \\
         & \algname S2 & 7.38 & 11.9 & 12.6 & 11.5 \\
         & \algname S3 & 5.60 & 4.85 & 8.43 & 9.78 \\
         & \algname S4 & 2.11 & 6.68 & 7.10 & 9.31 \\
         & GRAB & -0.83 & 1.70 & 3.63 & 3.68 \\
         & CLOSEST & 8.56 & 7.32 & 7.67 & 8.44 \\
         & MVDR & -2.17 & -3.47 & -3.42 & -4.13 \\
         & IVA & -8.92 & -8.77 & -8.81 & -8.99 \\
		\hline\hline
        \multicolumn{6}{l}{\scriptsize{S1: seen speaker, seen noise;~~~~~~~~S2: seen speaker, unseen noise;}} \\
        \multicolumn{6}{l}{\scriptsize{S3: unseen speaker, seen noise;~~~~S4: unseen speaker, unseen noise.}}
	\end{tabular}
    \vspace{-0.1in}
\end{table}

Table~\ref{tab:simulated} shows the results. As expected, \algnamens's performance drops from S1, where both noise and speaker are seen during training, to S4, where neither is seen. However, in terms of SNR, even \algname S4 significantly outperforms MVDR, which is the benchmark in noise suppression. In terms of DRR, \algname matches or surpasses CLOSEST except for -10 dB. GRAB performs poorer than in \cite{zhang2017glottal}, because each utterance is reduced from 10 seconds to 3 seconds, which is more realistic but challenging. In short, of ``cleanness'' and ``dryness'', most algorithms can only achieve one, but \algname can achieve \emph{both} with superior performance.

\subsection{Real-world Data Evaluation}
\label{subsec:realworld}

\algname and the baselines are also evaluated on the real-world dataset introduced in \cite{zhang2017glottal}, which consists of two utterances by two speakers mixed with five types of noises, all recorded in a real conference room using eight randomly positioned microphones. The source energy ratio is set such that the SNR for the closest microphone is $10$~dB. The utterance in each scenario is around 1 minute long, so the total length of the dataset is $10$ minutes.

Besides SNR, a subjective test similar to \cite{zhang2017glottal} is performed on Amazon Mechanical Turk. Each utterance is broken into six sentences. In each test unit, called HIT, a subject is presented with one sentence processed by the five algorithms, and asked to assign an MOS \cite{ribeiro2011crowdmos} to each of them. Each HIT is assigned to 10 subjects.

\begin{table}
\caption{Realworld Data Evaluation Results.}
\label{tab:realworld}
\centering
	\begin{tabular}{l| l | c c c c c}
    	\hline\hline
        \multicolumn{2}{c|}{\textbf{Noise Type}} & N1 & N2 & N3 & N4 & N5 \\
        \hline
        \multirow{5}{0.7cm}{\textbf{SNR} (dB)} & \algname & \textbf{20.1} & \textbf{20.0} & \textbf{16.9} & \textbf{19.6} & \textbf{18.7} \\
         & GRAB & 18.9 & 17.4 & 12.4 & 18.5 & 17.4 \\
         & CLOSEST & 10.0 & 10.0 & 10.0 & 10.0 & 10.0\\
         & MVDR & 10.8 & 16.5 & 7.72 & 14.0 & 13.4 \\
         & IVA & 11.7 & 9.74 & 6.83 & 12.4 & 15.9 \\
         \hline
        \multirow{5}{0.7cm}{\textbf{MOS}} & \algname & \textbf{3.83} & \textbf{3.72} & \textbf{3.63} & \textbf{4.09} & \textbf{4.20} \\
         & GRAB & 3.10 & 3.06 & 2.93 & 3.71 & 3.45 \\
         & CLOSEST & 2.74 & 2.68 & 3.02 & 3.55 & 3.50 \\
         & MVDR & 2.05 & 2.40 & 2.28 & 2.71 & 2.62 \\
         & IVA & 1.73 & 2.03 & 1.75 & 1.78 & 2.08 \\
		\hline\hline
        \multicolumn{7}{l}{\scriptsize{N1: cell phone;~~~~N2: CombBind machine;~~~~N3:paper shuffle;}} \\
        \multicolumn{7}{l}{\scriptsize{N4: door slide;~~~~~N5: footsteps.}}
	\end{tabular}
    \vspace{-0.2in}
\end{table}

Table~\ref{tab:realworld} shows the results. As can be seen, \algname outperforms the other algorithms by a large margin. In particular, \algname achieves $>4$ MOS in some noise types. These results are very impressive, because \algname is only trained on simulated data. The real-world data differ significantly from the simulated data in terms of speakers, noise types and recording environment. What's more, some microphones are contaminated by strong electric noise, which is not accounted for in Eq.~\eqref{eq:noisy_obs}. Still, \algname manages to conquer all the unexpected. Neural network used to be vulnerable to unseen scenarios, but \algname has now made it robust.

\subsection{Empirical Convergence Analysis}
\label{subsec:convergence}
\begin{figure}
	\centering
	\includegraphics[width = 1\linewidth]{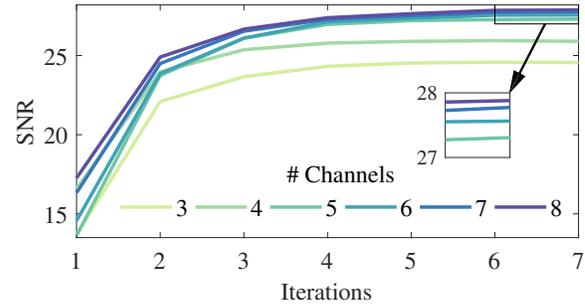}
    \vspace{-0.2in}
	\caption{SNR convergence curves with different numbers of channels.}
    \vspace{-0.1in}
	\label{fig:convergence}
\end{figure}

In order to empirically test whether \algname has a good convergence property, 10 sets of eight-channel simulated data are generated with the S1 setting and $E_r=10$. To study different number of channels, in each sub-test, $K$ channels are randomly drawn from each set of data for \algname prediction, and the resulting SNR convergence curves of the 10 sets are averaged. $K$ runs from 3 to 8.

Fig.~\ref{fig:convergence} shows all the averaged convergence curves. As can be seen, \algname converges well in all the sub-tests, which supports our convergence discussions in section~\ref{subsec:analysis}. Also, the more channels \algname has, the higher convergence level it can reach, which shows that \algname is able to accommodate different numbers of channels using only one monaural network. We also see that the marginal benefit of having one more channel diminishes.

\section{Conclusion}
\label{sec:conclu}
We have proposed \algname as a solution to multi-channel speech enhancement with ad-hoc sensors. \algname combines the complementary beamforming and deep learning techniques, and has exhibited superior performance and generalizability in terms of noise suppression, reverberation cancellation and perceptual quality. \algname has made one step closer to resolving the long lasting crux of low perceptual quality and poor generalizability in deep enhancement networks, which demonstrates the power of bridging the signal processing and deep learning areas.

\bibliographystyle{IEEE}
\small
\bibliography{refs}

\begin{thebibliography}{10}

\bibitem{brandstein2013microphone}
Michael Brandstein and Darren Ward,
\newblock {\em Microphone arrays: signal processing techniques and
  applications},
\newblock Springer Science \& Business Media, 2013.

\bibitem{markovich2015optimal}
Shmulik Markovich-Golan, Alexander Bertrand, Marc Moonen, and Sharon Gannot,
\newblock ``Optimal distributed minimum-variance beamforming approaches for
  speech enhancement in wireless acoustic sensor networks,''
\newblock {\em Signal Processing}, vol. 107, pp. 4--20, 2015.

\bibitem{gillespie2001speech}
Bradford~W Gillespie, Henrique~S Malvar, and Dinei~AF Flor{\^e}ncio,
\newblock ``Speech dereverberation via maximum-kurtosis subband adaptive
  filtering,''
\newblock in {\em IEEE International Conference on Acoustics, Speech and Signal
  Processing (ICASSP)}. IEEE, 2001, vol.~6, pp. 3701--3704.

\bibitem{kumatani2009beamforming}
Kenichi Kumatani, John McDonough, Barbara Rauch, Dietrich Klakow, Philip~N
  Garner, and Weifeng Li,
\newblock ``Beamforming with a maximum negentropy criterion,''
\newblock {\em IEEE Transactions on Audio, Speech, and Language Processing},
  vol. 17, no. 5, pp. 994--1008, 2009.

\bibitem{kim2007blind}
Taesu Kim, Hagai~T Attias, Soo-Young Lee, and Te-Won Lee,
\newblock ``Blind source separation exploiting higher-order frequency
  dependencies,''
\newblock {\em IEEE Transactions on Audio, Speech, and Language Processing},
  vol. 15, no. 1, pp. 70--79, 2007.

\bibitem{kitamura2016determined}
Daichi Kitamura, Nobutaka Ono, Hiroshi Sawada, Hirokazu Kameoka, and Hiroshi
  Saruwatari,
\newblock ``Determined blind source separation unifying independent vector
  analysis and nonnegative matrix factorization,''
\newblock {\em IEEE Transactions on Audio, Speech, and Language Processing},
  vol. 24, no. 9, pp. 1626--1641, 2016.

\bibitem{zhang2017glottal}
Yang Zhang, Dinei Flor{\^e}ncio, and Mark Hasegawa-Johnson,
\newblock ``Glottal model based speech beamforming for ad-hoc microphone
  arrays,''
\newblock {\em INTERSPEECH}, pp. 2675--2679, 2017.

\bibitem{chen2016long}
Jitong Chen and Deliang Wang,
\newblock ``Long short-term memory for speaker generalization in supervised
  speech separation,''
\newblock in {\em INTERSPEECH}, 2016, pp. 3314--3318.

\bibitem{huang2014deep}
Po-Sen Huang, Minje Kim, Mark Hasegawa-Johnson, and Paris Smaragdis,
\newblock ``Deep learning for monaural speech separation,''
\newblock in {\em IEEE International Conference on Acoustics, Speech and Signal
  Processing (ICASSP)}, 2014, pp. 1562--1566.

\bibitem{weninger2014discriminatively}
Felix Weninger, John~R Hershey, Jonathan Le~Roux, and Bj{\"o}rn Schuller,
\newblock ``Discriminatively trained recurrent neural networks for
  single-channel speech separation,''
\newblock in {\em IEEE Global Conference on Signal and Information Processing
  (GlobalSIP)}, 2014, pp. 577--581.

\bibitem{qian2017speech}
Kaizhi Qian, Yang Zhang, Shiyu Chang, Xuesong Yang, Dinei Flor{\^e}ncio, and
  Mark Hasegawa-Johnson,
\newblock ``Speech enhancement using {Bayesian} {Wavenet},''
\newblock {\em INTERSPEECH}, pp. 2013--2017, 2017.

\bibitem{rethage2017wavenet}
Dario Rethage, Jordi Pons, and Xavier Serra,
\newblock ``A {Wavenet} for speech denoising,''
\newblock {\em arXiv preprint arXiv:1706.07162}, 2017.

\bibitem{pascual2017segan}
Santiago Pascual, Antonio Bonafonte, and Joan Serr{\`a},
\newblock ``{SEGAN}: Speech enhancement generative adversarial network,''
\newblock {\em arXiv preprint arXiv:1703.09452}, 2017.

\bibitem{heymann2016neural}
Jahn Heymann, Lukas Drude, and Reinhold Haeb-Umbach,
\newblock ``Neural network based spectral mask estimation for acoustic
  beamforming,''
\newblock in {\em Acoustics, Speech and Signal Processing (ICASSP), 2016 IEEE
  International Conference on}. IEEE, 2016, pp. 196--200.

\bibitem{erdogan2016improved}
Hakan Erdogan, John~R Hershey, Shinji Watanabe, Michael~I Mandel, and Jonathan
  Le~Roux,
\newblock ``Improved mvdr beamforming using single-channel mask prediction
  networks.,''
\newblock in {\em INTERSPEECH}, 2016, pp. 1981--1985.

\bibitem{xiao2017time}
Xiong Xiao, Shengkui Zhao, Douglas~L Jones, Eng~Siong Chng, and Haizhou Li,
\newblock ``On time-frequency mask estimation for mvdr beamforming with
  application in robust speech recognition,''
\newblock in {\em Acoustics, Speech and Signal Processing (ICASSP), 2017 IEEE
  International Conference on}. IEEE, 2017, pp. 3246--3250.

\bibitem{zhang2017speech}
Xueliang Zhang, Zhong-Qiu Wang, and DeLiang Wang,
\newblock ``A speech enhancement algorithm by iterating single-and
  multi-microphone processing and its application to robust asr,''
\newblock in {\em Acoustics, Speech and Signal Processing (ICASSP), 2017 IEEE
  International Conference on}. IEEE, 2017, pp. 276--280.

\bibitem{pfeifenberger2017dnn}
Lukas Pfeifenberger, Matthias Z{\"o}hrer, and Franz Pernkopf,
\newblock ``Dnn-based speech mask estimation for eigenvector beamforming,''
\newblock in {\em Acoustics, Speech and Signal Processing (ICASSP), 2017 IEEE
  International Conference on}. IEEE, 2017, pp. 66--70.

\bibitem{oord2016wavenet}
Aaron van~den Oord, Sander Dieleman, Heiga Zen, Karen Simonyan, Oriol Vinyals,
  Alex Graves, Nal Kalchbrenner, Andrew Senior, and Koray Kavukcuoglu,
\newblock ``{WaveNet}: A generative model for raw audio,''
\newblock {\em arXiv preprint arXiv:1609.03499}, 2016.

\bibitem{wiener1949extrapolation}
Norbert Wiener,
\newblock {\em Extrapolation, interpolation, and smoothing of stationary time
  series}, vol.~7,
\newblock MIT press Cambridge, MA, 1949.

\bibitem{allen1979image}
Jont~B Allen and David~A Berkley,
\newblock ``Image method for efficiently simulating small-room acoustics,''
\newblock {\em The Journal of the Acoustical Society of America}, vol. 65, no.
  4, pp. 943--950, 1979.

\bibitem{lehmann2010diffuse}
Eric~A Lehmann and Anders~M Johansson,
\newblock ``Diffuse reverberation model for efficient image-source simulation
  of room impulse responses,''
\newblock {\em IEEE Transactions on Audio, Speech, and Language Processing},
  vol. 18, no. 6, pp. 1429--1439, 2010.

\bibitem{yamagishienglish}
Junichi Yamagishi,
\newblock ``English multi-speaker corpus for {CSTR} voice cloning toolkit,''
  \url{http://homepages.inf.ed.ac.uk/jyamagis/page3/page58/page58.html}.

\bibitem{kumar2016speech}
Anurag Kumar and Dinei Flor{\^e}ncio,
\newblock ``Speech enhancement in multiple-noise conditions using deep neural
  networks,''
\newblock {\em INTERSPEECH}, 2016.

\bibitem{soundcloud}
``Freesound,'' \url{https://freesound.org/}, 2015.

\bibitem{hu100}
Guoning Hu,
\newblock ``100 nonspeech sounds,''
  \url{http://web.cse.ohio-state.edu/pnl/corpus/HuNonspeech/HuCorpus.html},
  2015.

\bibitem{garofolo1993darpa}
John~S Garofolo, Lori~F Lamel, William~M Fisher, Jonathon~G Fiscus, and David~S
  Pallett,
\newblock ``{DARPA TIMIT} acoustic-phonetic continous speech corpus {CD-ROM}.
  {NIST} speech disc 1-1.1,''
\newblock {\em NASA STI/Recon technical report n}, vol. 93, 1993.

\bibitem{FreeSFX}
``{FreeSFX},'' \url{http://www.freesfx.co.uk/}, 2017.

\bibitem{griffiths1982alternative}
Lloyd Griffiths and CW~Jim,
\newblock ``An alternative approach to linearly constrained adaptive
  beamforming,''
\newblock {\em IEEE Transactions on antennas and propagation}, vol. 30, no. 1,
  pp. 27--34, 1982.

\bibitem{ribeiro2011crowdmos}
Fl{\'a}vio Ribeiro, Dinei Flor{\^e}ncio, Cha Zhang, and Michael Seltzer,
\newblock ``{CrowdMOS}: An approach for crowdsourcing mean opinion score
  studies,''
\newblock in {\em IEEE International Conference on Acoustics, Speech and Signal
  Processing (ICASSP)}. IEEE, 2011, pp. 2416--2419.

\end{thebibliography}

\end{document}